\newtheorem{theorem}{Theorem}
\begin{document}
%
\title{Stochastic Feature Mapping for PAC-Bayes Classification}
%
%
%
%

\author{Xiong Li,
        and Tai Sing Lee,
        and Yuncai~Liu,~\IEEEmembership{Senior Member,~IEEE,} 
\IEEEcompsocitemizethanks{\IEEEcompsocthanksitem Xiong Li and Yuncai Liu are with the Department of Automation, Shanghai Jiao Tong University, Shanghai, China; Tai Sing Lee and Xiong Li are with the Computer Science Department, Carnegie Mellon University, PA, USA. \protect\\
E-mail: \{lixiong, whomliu\}@sjtu.edu.cn, tai@cs.cmu.edu

}}

%
%

\markboth{}{}
\IEEEcompsoctitleabstractindextext{%
\begin{abstract}
Probabilistic generative modeling of data distributions can potentially exploit hidden information which is useful for discriminative classification. This observation has motivated the development of approaches that couple generative and discriminative models for classification. In this paper, we propose a new approach to couple generative and discriminative models in an unified framework based on PAC-Bayes risk theory. We first derive the {\em model-parameter-independent stochastic feature mapping} from a practical MAP classifier operating on generative models. Then we construct a linear stochastic classifier equipped with the feature mapping, and derive the {\em explicit PAC-Bayes risk bounds} for such classifier for both supervised and semi-supervised learning. Minimizing the risk bound, using an EM-like iterative procedure, results in a new posterior over hidden variables (E-step) and the update rules of model parameters (M-step). The derivation of the posterior is always feasible due to {\em  the way of equipping feature mapping} and {\em the explicit form of bounding risk}. The derived posterior allows the tuning of generative models and subsequently the feature mappings for better classification. {\em The derived update rules of the model parameters are same to those of the uncoupled models} as the feature mapping is model-parameter-independent. Our experiments show that the coupling between data modeling generative model and the discriminative classifier via a stochastic feature mapping in this framework leads to a general classification tool with state-of-the-art performance.
\end{abstract}

\begin{IEEEkeywords}
stochastic feature mapping; PAC-Bayes risk bound; hybrid generative-discriminative classification
\end{IEEEkeywords}}

\maketitle

\IEEEdisplaynotcompsoctitleabstractindextext

%
\IEEEpeerreviewmaketitle

\section{Introduction}

Discriminative models designed to find decision boundaries among different classes are state-of-the-art tools for classification, while probabilistic generative models seeking to model data distributions are adept in exploiting hidden information, in dealing with structured data (e.g. protein sequence with variable length) and in solving nonlinear classification problems using maximum a posterior (MAP) classifier. The complementarities of the two paradigms have been investigated~\cite{jaakkola1999maxent,Ng02ondiscriminative}, resulting in several promising works~\cite{jaakkola1999exploiting,raina2004gendis,mccallum2006gendis,perina2009fess}. The above observations have emerged from these works in the context of classification: (1) generative models provide feature mappings that simultaneously exploit hidden information, and transform structured data into a fixed dimensional feature; (2) discriminative models find an optimum decision boundaries in such a feature space under specified criterion.

Generative score space methods~\cite{jaakkola1999exploiting,perina2009fess,li2011posdiv} are
motivated by the above observations. These methods derive feature mappings from the log likelihood (or its lower bound) of generative models. These feature mappings are measures over models $P(\mathbf{x},\mathbf{h},\theta)$, taking the form of $\mathrm{E}_{P(\mathbf{h}|\mathbf{x})} [\phi(\mathbf{x},\mathbf{h},\theta)]$ where $\phi$ is a function over the observed variable $\mathbf{x}$ and the hidden variable set $\mathbf{h}$. They map observed and hidden variables into a vector of score, which are then used as features by classifiers. These methods exploit the superior abilities of generative models in exploiting hidden information and dealing with structured data. However, in these methods, generative models are isolated from the classification process and there is no principled way to tune the generative models as well as the feature mapping to improve classification. It is desirable to develop a mechanism that can couple the classifier to the generative models to allow fine-tuning of the feature mapping.

Maximum entropy discrimination~\cite{jaakkola1999maxent} provides yet another framework to exploit generative models for classification under the large margin principle. This framework, however, requires deliberately choosing conjugate priors for parameters of the generative models, which limits its application to complex models. In addition, the VC risk bound~\cite{vapnik2000nature} utilized by this method is generally loose in comparison with the PAC-Bayes bounds~\cite{mcallester1999somepacl,langford2006tutorial,germain2009pac}. Also, there are some other efforts~\cite{raina2004gendis,mccallum2006gendis} made to couple generative and discriminative models for classification. However, these methods provide no explicit feature mapping which is useful in real applications. Further, they requires re-formulating the update rules of the parameters of generative models, which is typically complex.

This paper proposes an approach based on the PAC-Bayes theory~\cite{mcallester1999somepacl,langford2006tutorial,germain2009pac} to integrate the complementary strengths of generative and discriminative models. Using the linear form of a practical MAP classifier operating on generative models, we derive the {\em model-parameter-independent stochastic feature mapping}. By the feature mapping, we meant that the feature used for classifications is a function of the input data and the hidden variables of the generative models. This is distinct from the current methods~\cite{jaakkola1999exploiting,perina2009fess,li2011posdiv} which map a data point to a feature deterministically. Then we construct a {\em stochastic classifier}, a Gibbs classifier operating on the derived feature mapping, and derive explicit PAC-Bayes risk bounds for such a classifier. By minimizing the risk bound using an EM-like iterative procedure, we derive {\em the posterior over the hidden variables} (E-step) and {\em the update rules of model parameters} (M-step). The derivation is always feasible due to the way of equipping feature mapping and the form of bounding risk. The posterior provides a bridge that allows the classifier to tune the generative models and subsequently the feature mapping for classification. The update rules of model parameters are quite simple -- essentially same to those of the uncoupled models as the feature mapping is model-parameter-independent.

\section{From MAP classifier to Stochastic Feature Mapping} \label{sec:map_sfm}

In this section, for exponential family generative models, we drive the linear form (Eq.~(\ref{eqn:disfun})) of the MAP classifier (Eq.~(\ref{eqn:L2})) based on the variational approximation (Eq.~(\ref{eqn:varinf})), and use that to derive a stochastic feature mapping. The derived feature mapping is functioning similar to~\cite{jaakkola1999exploiting,perina2009fess,li2011posdiv}. Consider the binary classification problem that assigns labels $y\in \{-1,+1\}$ to examples $\mathbf{x}\in \mathbb{R}^d$. Let $P(\mathbf{x}\,|\,\theta_y)$ be the class-conditional distributions over $\mathbf{x}$; $P(y)$ be the prior of labels. The decision rule of the MAP classifier is $\hat{y}=\max_y P(\mathbf{x}\,|\,\theta_y)P(y)$, which is equivalent to $\hat{y}=\mathrm{sign} (L(\mathbf{x};\Theta))$ where $\mathrm{sign}(a)=+1$ if $a>0$ and $\mathrm{sign}(a)=-1$ otherwise, and the discriminant function:
\begin{equation} \label{eqn:L1}
L(\mathbf{x},\Theta) =  \log {P(\mathbf{x}\,|\,\theta_{+})}- \log {P(\mathbf{x}\,|\,\theta_{-})}+b
\end{equation}
where subscripts $+,-$ are the shorts of $+1,-1$; $\Theta \!=\! \{\theta_{-},\theta_{+},b\}$; $b \!=\! \log P(y \!=\! +1) -\log P(y \!=\! -1)$. When $P(\mathbf{x}\,|\,\theta_y)$ is modeled by a generative model $P(\mathbf{x},\mathbf{h}\,|\,\theta_y)$ with a set of random hidden variables $\mathbf{h}$, it is difficult to obtain a close form of $P(\mathbf{x}\,|\,\theta_y)$ since $\int\! P(\mathbf{x},\mathbf{h}\,|\,\theta_y) {d}\mathbf{h}$ is usually intractable. We can resort to the following variational lower bound~\cite{neal1998view,jordan1999variational}:
\begin{equation} \label{eqn:varinf}
\log P(\mathbf{x}\,|\,\theta_y)\geq \mathrm{E}_{Q(\mathbf{h})}[ \log P(\mathbf{x},\mathbf{h})-\log Q(\mathbf{h})] \triangleq F(\mathbf{x},\theta_y)
\end{equation}
where $Q(\mathbf{h})$ is the variational approximate posterior of $P(\mathbf{h}\,|\,\mathbf{x})$. Then, instead of the intractable discriminant function (Eq.~(\ref{eqn:L1})), we resort to the following tractable one~\cite{jaakkola1999maxent,perina2009fess}
\begin{equation} \label{eqn:L2}
\hat{L}(\mathbf{x},\Theta) = F(\mathbf{x},\theta_+) - F(\mathbf{x},\theta_-) + b
\end{equation}

We assume the generative model $P(\mathbf{x},\mathbf{h}\,|\,\theta)$ belong to the exponential family which covers most models. We have the general form $P(\mathbf{x},\mathbf{h})=\exp\{ a(\theta)^T T(\mathbf{x},\mathbf{h}) + S(\mathbf{x},\mathbf{h})+d(\theta) \}$ where $\theta$ is the vector of parameters; $T(\mathbf{x},\mathbf{h})$ is the vector of sufficient statistics; $S(\mathbf{x},\mathbf{h})$ and $d(\theta)$ are scalar functions. Similarly, the prior over $\mathbf{h}$ is $P(\mathbf{h})=\exp \{c(\theta_h)^TT(\mathbf{h})+S(\mathbf{h})+f(\theta_h)\}$. Further, we assume that the approximate posterior of $\mathbf{h}$, for the example $\mathbf{x}$, takes the same from with its prior $P(\mathbf{h})$ but with different parameter~\cite{jordan1999variational}  $Q(\mathbf{h})=\exp \{c(\theta_h')^TT(\mathbf{h})+S(\mathbf{h})+f(\theta_h')\}$. Substituting the above formulas of $P(\mathbf{x},\mathbf{h})$ and $Q(\mathbf{h})$ into Eq.~(\ref{eqn:varinf}), it can be verified that $F(\mathbf{x},\theta)\!=\!\mathrm{E}_{Q(\mathbf{h})}[\log P(\mathbf{x},\mathbf{h})-\log Q(\mathbf{h})]\!=\! \alpha^T \mathrm{E}_{Q(\mathbf{h})}[\tilde{T}(\mathbf{x},\mathbf{h})] + \beta$, where $\alpha \!=\! (a(\theta)^T,1,-\mathbf{1}^T,-1,-1)$; $\beta\!=\!d(\theta)$; $\tilde{T}(\mathbf{x},\mathbf{h})\!=\!(T(\mathbf{x},\mathbf{h})^T,S(\mathbf{x},\mathbf{h}),(\mathrm{diag}(c(\theta_h')T(\mathbf{h})))^T,S(\mathbf{h}),f(\theta_h'))^T$. For a pair of models $\theta_{+}$ and $\theta_{-}$, Eq.~(\ref{eqn:L2}) can be written as:
\begin{eqnarray}
\hat{L}(\mathbf{x},\Theta) &\!\!\!\!\!=\!\!\!\!\!& \alpha_{+}^T \mathrm{E}_{Q(\mathbf{h}_+)} [\tilde{T}(\mathbf{x},\mathbf{h}_+)] \!-\! \alpha_{-}^T \mathrm{E}_{Q(\mathbf{h}_-)} [\tilde{T}(\mathbf{x},\mathbf{h}_-)] \!+ \beta_+ \!- \beta_- \!+ b \nonumber
\\
&\!\!\!\!\!=\!\!\!\!\!& \tilde{\alpha}^T \mathrm{E}_{Q(\mathbf{h}_+,\mathbf{h}_-)} [\phi(\mathbf{x},\mathbf{h}_+,\mathbf{h}_-) ]  \label{eqn:disfun}
\end{eqnarray}
where $\tilde{\alpha} = (\alpha_{+}^T,-\alpha_{-}^T,\beta_+-\beta_-+b)^T\!$, $\phi(\mathbf{x},\mathbf{h}_+,\mathbf{h}_-) \triangleq (\tilde{T}(\mathbf{x},\mathbf{h}_+)^T, \\\!\tilde{T}(\mathbf{x},\mathbf{h}_-)^T,\!1)^T $. Eq.~(\ref{eqn:disfun}) takes the form of the linear classifier, where $\mathrm{E}_{Q} [\phi(\mathbf{x},\mathbf{h}_+,\mathbf{h}_-)$ is considered to furnish a feature mapping. From another perspective, $\phi(\mathbf{x},\mathbf{h}_+,\mathbf{h}_-)$ can be considered as a {\em stochastic feature mapping} because the hidden variables $\mathbf{h}_+$, $\mathbf{h}_-$ are all conditioned on the example $\mathbf{x}$ and thus its value can serve as feature for identifying $\mathbf{x}$. It is considered to define a {\em stochastic feature space} because it is evaluated based on stochastic examples drawn from the posterior of $\mathbf{h}$.

\section{PAC-Bayes bound stochastic classifier}

The derived stochastic feature mapping $\phi(\mathbf{x},\mathbf{h}_+,\mathbf{h}_-)$ makes it possible to jointly learn generative models (subsequently feature mappings) and classifier. We construct a linear Gibbs classifier over this stochastic feature mapping:
\begin{equation} \label{eqn:classifer}
G_Q = \mathrm{sign}[\mathbf{w} \cdot \phi(\mathbf{x},\mathbf{h}_+,\mathbf{h}_-)] \triangleq f_{\mathbf{w}}(\mathbf{x},\mathbf{h}_+,\mathbf{h}_-)
\end{equation}
where $\mathbf{w}$ is the weight of classifier; $\mathbf{h}_+$, $\mathbf{h}_-$, $\mathbf{w}$ follow some distribution $Q$ which will be specified in Section~\ref{ssec:objfun}. Gibbs classifier with such a feature mapping offers {\em several advantages}. {\em First}, this classifier allows PAC-Bayes risk bounds that have explicit solutions for $Q(\mathbf{h}_+,\mathbf{h}_-)$ which can help tune the feature mapping for better classification; {\em Second}, the PAC-Bayes risk bound for such a classifier can be tighter than VC bounds~\cite{vapnik2000nature}; {\em Third}, the feature mapping is independent with model parameters $\theta$, making the solution of $\theta$ very simple.

\subsection{PAC-Bayes bounds for stochastic feature mapping}

Let $\mathcal{X}$ be the input space consisting of an arbitrary subset of $\mathbb{R}^d$ and $\mathcal{Y}=\{-1,+1\}$ be the output space. An example is an input-output pair $(\mathbf{x},y)$ where $\mathbf{x}\in \mathcal{X}$ and $y\in \mathcal{Y}$. In a PAC-Bayes setting~\cite{mcallester1999somepacl}, each example $(\mathbf{x},y)$ is drawn from a fixed, but unknown, probability distribution $D$ on $\mathcal{X}\times \mathcal{Y}$. Let $f(\mathbf{x},\mathbf{v}): \mathcal{X}\rightarrow \mathcal{Y}$ be any classifier with a set of variables $\mathbf{v}$. The learning task is to choose a posterior distribution $Q$ over a space $\mathcal{F}$ of classifiers and a space $\mathcal{V}$ of variables such that the Q-weight majority classifier $B_Q=\mathrm{sign}[\mathrm{E}_{(f,\mathbf{v})\sim Q} f(\mathbf{x},\mathbf{v})]$ will have the smallest possible risk on the training example set $S=\{(\mathbf{x}_1,y_1),\cdots,(\mathbf{x}_m,y_m)\}$. The output of $B_Q$ is closely related to the output of the Gibbs classifier $G_Q$ which first chooses a classifier $f$ and a vector $\mathbf{v}$ according to $Q$, and then classifies an example $\mathbf{x}$. The {\it true risk} $R(G_Q)$ and the {\it empirical risk} $R_S(G_Q)$ of this Gibbs classifier are given by:
\begin{eqnarray}
R(G_Q)&\!\!\!=\!\!\!&\mathop{\mathrm{E}}\limits_{(f,\mathbf{v})\sim Q} \mathop{\mathrm{E}}\limits_{(\mathbf{x},y)\sim D} \mathrm{I}(f(\mathbf{x},\mathbf{v}) \neq y)  \label{eqn:RGQ} \\
R_S(G_Q)&\!\!\!=\!\!\!&\mathop{\mathrm{E}}\limits_{(f,\mathbf{v})\sim Q} \frac{1}{m} \sum_{i=1}^m \mathrm{I}(f(\mathbf{x}_i,\mathbf{v}) \neq y_i) \label{eqn:RSGQ}
\end{eqnarray}
This setting is naturally accommodated by PAC-Bayes theory since $\mathbf{v}$ can be considered as a part of $f$. Among several PAC-Bayes bounds~\cite{mcallester1999somepacl,langford2006tutorial,germain2009pac,lacasse2006majvote}, the bound derived in~\cite{germain2009pac} is quite tight and gives an explicit bound for the true risk $R(G_Q)$, which allows the derivation of the posterior $Q$, in contrast to most of the other implicit bounds over $\mathrm{KL}(Rs(G_Q)\!\parallel\!R(G_Q))$.

\begin{theorem}\label{th:bound_sup}
For any distribution $D$ over $\mathcal{X}\times \mathcal{Y}$, any space $\mathcal{F}$ of classifiers, any space $\mathcal{V}$ of random variables,  any distribution $P$ over $\mathcal{F} \!\times\! \mathcal{V}$, any $\delta \in (0,1]$, and any real number $C>0$, $\forall\; Q$ over $\mathcal{F} \!\times\! \mathcal{V}$, we have:
\begin{eqnarray*}
&&\!\!\!\!\!\!\!\!\!\!\!\!\!\!\!\! \Pr \Bigg(R(G_Q) \leq \frac{1}{1-e^{-C}} \Bigg[1-\exp \Bigg\{ - C R_S(G_Q)  \\
&& \quad\quad\quad\quad\quad\quad\quad - \frac{1}{m}  \left[ \mathrm{KL}(Q \!\parallel\! P)-\ln \delta \right]  \Bigg\} \Bigg] \,\Bigg) \geq 1-\delta
\end{eqnarray*}
\end{theorem}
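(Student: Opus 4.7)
The bound has the Catoni/Germain-style shape, so the plan is to instantiate the general PAC-Bayes recipe (change-of-measure + Markov + Jensen) with a carefully chosen convex ``discrepancy'' that makes the moment-generating function exactly computable.

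First, I would introduce the convex function
\[
\mathcal{D}_C(q,p) \;=\; -Cq - \ln\bigl(1-(1-e^{-C})p\bigr),
\]
which is linear (hence convex) in $q$, convex in $p$, and therefore jointly convex on $[0,1]^2$. The reason for this particular choice becomes clear from the next step: because $mR_S(f,\mathbf{v})=\sum_{i=1}^m \mathrm{I}(f(\mathbf{x}_i,\mathbf{v})\neq y_i)$ is a sum of $m$ i.i.d.\ Bernoulli variables with mean $R(f,\mathbf{v})$, a direct computation gives, for every fixed $(f,\mathbf{v})$,
\[
\mathop{\mathrm{E}}_{S\sim D^m}\!\bigl[e^{-CmR_S(f,\mathbf{v})}\bigr] \;=\; \bigl(1-(1-e^{-C})R(f,\mathbf{v})\bigr)^{\!m},
\]
so that $\mathrm{E}_S\bigl[e^{m\mathcal{D}_C(R_S(f,\mathbf{v}),R(f,\mathbf{v}))}\bigr]=1$.

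Next I would integrate out $(f,\mathbf{v})\sim P$, swap expectations by Fubini, and use Markov's inequality: with probability at least $1-\delta$ over $S$,
\[
\mathop{\mathrm{E}}_{(f,\mathbf{v})\sim P}\bigl[e^{m\mathcal{D}_C(R_S(f,\mathbf{v}),R(f,\mathbf{v}))}\bigr]\;\le\;\frac{1}{\delta}.
\]
Then I apply the Donsker--Varadhan change-of-measure inequality to the left side, which yields, for every posterior $Q$,
\[
\mathop{\mathrm{E}}_Q\bigl[m\mathcal{D}_C(R_S,R)\bigr]\;\le\;\mathrm{KL}(Q\|P)+\ln\!\tfrac{1}{\delta}.
\]

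Finally, joint convexity of $\mathcal{D}_C$ lets me use Jensen's inequality to pull the $Q$-expectation inside and identify $\mathrm{E}_Q R_S(f,\mathbf{v})=R_S(G_Q)$ and $\mathrm{E}_Q R(f,\mathbf{v})=R(G_Q)$ via Eq.~(\ref{eqn:RGQ})--(\ref{eqn:RSGQ}). Substituting and algebraically solving the resulting inequality $-CR_S(G_Q)-\ln(1-(1-e^{-C})R(G_Q))\le \tfrac{1}{m}[\mathrm{KL}(Q\|P)-\ln\delta]$ for $R(G_Q)$ produces exactly the stated bound.

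The bulk of the work is conceptual rather than technical: the real obstacle is guessing the right $\mathcal{D}_C$ so that the Bernoulli moment generating function collapses to $1$, because without that normalization the Markov step would leave behind an extra $\ln\mathrm{E}_P\mathrm{E}_S[\cdots]$ term and the bound would not be explicit. Checking joint convexity of $\mathcal{D}_C$ (needed for Jensen) and verifying that the final algebraic inversion preserves the inequality direction (which requires $1-e^{-C}>0$, guaranteed by $C>0$) are the only places where one has to be careful; everything else reduces to the standard Markov--Donsker-Varadhan--Jensen chain underlying essentially all PAC-Bayes bounds, and since the randomness $\mathbf{v}$ is absorbed into the classifier index, no new machinery beyond the classical setting of~\cite{germain2009pac} is needed.
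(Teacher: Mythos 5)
Your proposal is correct and is essentially the paper's own argument made explicit: the paper proves Theorem~\ref{th:bound_sup} simply by citing Corollary 2.2 of~\cite{germain2009pac} with $f$ replaced by $(f,\mathbf{v})$, and your Markov--change-of-measure--Jensen chain with $\mathcal{D}_C(q,p)=-Cq-\ln(1-(1-e^{-C})p)$ is precisely the proof of that corollary, carried out with the pair $(f,\mathbf{v})$ as the hypothesis index. The only implicit hypothesis worth stating is that $P$ is chosen independently of the sample $S$ (needed for the Fubini/Markov step), which is the standard PAC-Bayes convention the paper also assumes.
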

\noindent This is a slight extension of Corollary 2.2 of~\cite{germain2009pac}, and can be proved by replacing $f$ with $(f,\mathbf{v})$ and reapplying its proof~\cite{germain2009pac}.

The above risk bound is derived for labeled data. Here we have extended the bound to accommodate both labeled and unlabeled data for semi-supervised learning in the following theorem. The semi-supervised bound is different with~\cite{lacasse2006majvote} whose bound is implicit and has no explicit solution of $Q$.

\begin{theorem}\label{th:bound_semi}
For any distribution $D$ over $\mathcal{X}\times \mathcal{Y}$, any space $\mathcal{F}$ of classifiers, any space $\mathcal{V}$ of random variables, distribution $P$ over $\mathcal{F} \!\times\! \mathcal{V}$, any $\delta \in (0,1]$, and any real number $C>0$, $\forall\; Q$ over $\mathcal{F} \!\times\! \mathcal{V}$, we have:
\begin{eqnarray*}
&&\!\!\!\!\!\!\!\!\!\!\!\!\!\!\!\!  \Pr \Bigg ( R(G_Q) \leq \frac{1}{1-e^{-C}} \Bigg[1 - \exp \Bigg\{ -C \bigg[e_S(G_Q) + \frac{1}{2}d_S(G_Q) \bigg]  \\
&& \quad\quad\quad\quad\quad\quad\quad - \frac{1}{m} \left[\mathrm{KL}(Q \!\parallel\! P)-\ln \delta \right]  \Bigg\} \Bigg] \,\Bigg) \geq 1-\delta
\end{eqnarray*}
where the risks for labeled and unlabeled data are $e_S(G_Q)\!=\!\mathrm{E}_{(f_1,\mathbf{v}_1)\sim Q}\mathrm{E}_{(f_2,\mathbf{v}_2)\sim Q}\frac{1}{m}\sum_{i=1}^m \mathrm{I}(f_1(\mathbf{x}_i,\mathbf{v}_1)\neq y_i)) \mathrm{I}(f_2(\mathbf{x}_i,\mathbf{v}_2) \!\neq\! y_i))$ and $d_S(G_Q)=\mathrm{E}_{(f_1,\mathbf{v}_1)\sim Q}\mathrm{E}_{(f_2,\mathbf{v}_2)\sim Q}\frac{1}{m}\sum_{i=1}^m \mathrm{I}(f_1(\mathbf{x}_i,\mathbf{v}_1) \neq f_2(\mathbf{x}_i,\mathbf{v}_2))$.
\end{theorem}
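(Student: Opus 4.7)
The plan is to reduce Theorem~\ref{th:bound_semi} to the already-established Theorem~\ref{th:bound_sup} by proving the deterministic identity
\[
R_S(G_Q) \;=\; e_S(G_Q) \;+\; \tfrac{1}{2}\, d_S(G_Q),
\]
after which no additional change-of-measure or concentration argument is needed.

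To establish the identity, I would condition on each training example $(\mathbf{x}_i,y_i)$ and introduce the per-example Gibbs error
\[
p_i \;\triangleq\; \mathrm{E}_{(f,\mathbf{v})\sim Q}\,\mathrm{I}\!\bigl(f(\mathbf{x}_i,\mathbf{v})\neq y_i\bigr),
\]
so that $R_S(G_Q) = \tfrac{1}{m}\sum_i p_i$ by the definition of $R_S$. Because $(f_1,\mathbf{v}_1)$ and $(f_2,\mathbf{v}_2)$ are independent draws from $Q$, the $i$th contribution to $e_S(G_Q)$ factorizes as $p_i^2$. For $d_S(G_Q)$, the binary label assumption $y_i\in\{-1,+1\}$ makes the disagreement event $\{f_1(\mathbf{x}_i,\mathbf{v}_1)\neq f_2(\mathbf{x}_i,\mathbf{v}_2)\}$ the disjoint union of ``$f_1$ wrong, $f_2$ correct'' and ``$f_1$ correct, $f_2$ wrong'', so by independence its probability equals $2p_i(1-p_i)$. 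Summing over $i$ gives $e_S(G_Q)+\tfrac{1}{2}d_S(G_Q) = \tfrac{1}{m}\sum_i \bigl[p_i^2 + p_i(1-p_i)\bigr] = \tfrac{1}{m}\sum_i p_i = R_S(G_Q)$, as claimed.

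With the identity in hand, the proof concludes by substituting $R_S(G_Q)$ by $e_S(G_Q)+\tfrac{1}{2}d_S(G_Q)$ inside the bound of Theorem~\ref{th:bound_sup}; the probability statement, the $\mathrm{KL}$ term, and the remaining constants carry over verbatim. I expect the only conceptual point worth flagging (rather than a real computational obstacle) to be that the identity hinges on (i) labels being binary, so that ``disagree'' coincides with ``exactly one is wrong'', and (ii) $f_1,f_2$ being drawn independently under $Q$ so that pairwise expectations factor as $p_i^2$ and $p_i(1-p_i)$. A strictly semi-supervised variant, in which $d_S$ is evaluated on a pool of unlabeled examples disjoint from the labeled set used for $e_S$, would require an extra concentration step to control the gap between the two empirical averages, rather than the purely algebraic reduction sketched above.
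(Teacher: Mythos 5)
Your proposal is correct and follows essentially the same route as the paper: both proofs reduce Theorem~\ref{th:bound_semi} to Theorem~\ref{th:bound_sup} via the purely algebraic identity $R_S(G_Q)=e_S(G_Q)+\frac{1}{2}d_S(G_Q)$, which the paper derives by expanding $\mathrm{E}_{f_1,f_2}\mathrm{I}(f_1\neq f_2)=2\,\mathrm{E}_{f_1,f_2}[\mathrm{I}(f_1\neq y)-\mathrm{I}(f_1\neq y)\mathrm{I}(f_2\neq y)]$ and you derive, equivalently, through the per-example error probabilities $p_i$ with $p_i^2+p_i(1-p_i)=p_i$. Your closing caveat --- that the identity holds only when $e_S$ and $d_S$ are computed on the same sample, so the later split into disjoint labeled and unlabeled pools is a heuristic not covered by this theorem --- is a fair observation that the paper does not address.
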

\begin{proof}
Let $\mathrm{E}_f$ be the abbreviation of $\mathrm{E}_{(f,\mathbf{v})\sim Q}$. Note that $\mathrm{E}_{f_1,f_2}\mathrm{I}(f_1\neq f_2)=\mathrm{E}_{f_1,f_2} 2\mathrm{I}(f_1 \neq y)\mathrm{I}(f_2 = y) =\mathrm{E}_{f_1,f_2} 2\mathrm{I}(f_1 \neq y)(1-\mathrm{I}(f_2 \neq y))=\mathrm{E}_{f_1,f_2} 2(\mathrm{I}(f_1 \neq y)-\mathrm{I}(f_1 \neq y)\mathrm{I}(f_2 \neq y))$ and $R_S(G_Q)\!=\!\frac{1}{m}\sum_i\mathrm{E}_{f_1} \mathrm{I}(f_1 \neq y_i)$ (Eq.~(\ref{eqn:RSGQ})). Therefore $R_S(G_Q)\!=\!\frac{1}{m}\sum_i\mathrm{E}_{f_1} \mathrm{I}(f_1^i \neq y_i)\!=\!\frac{1}{m} \sum_i \mathrm{E}_{f_1,f_2}\mathrm{I}(f_1^i \neq y_i)\mathrm{I}(f_2^i \neq y_i)+\frac{1}{2m}\sum_i \mathrm{E}_{f_1,f_2}\mathrm{I}(f_1^i\neq f_2^i)=e_S(G_Q)+\frac{1}{2}d_S(G_Q)$, ($f_1^i=\!f_1(\mathbf{x}_i)$). Substituting $R_S(G_Q)=e_S(G_Q)\\+\frac{1}{2}d_S(G_Q)$ into Theorem~\ref{th:bound_sup}, then we obtain Theorem~\ref{th:bound_semi}.
\end{proof}

Since $d_S(G_Q)$ is independent of labels, it allows classifiers using the above bound to exploit unlabel data. Minimizing this risk $d_S(G_Q)$ would contract the posteriors over the stochastic classifier and the stochastic feature space, making classification and feature mapping less uncertain.

\subsection{Objective function and specification} \label{ssec:objfun}

Let $B(Q,C)= \frac{1}{1-e^{-C}} [1 - \exp \{ -J(Q)+\frac{1}{m}\ln \delta \}]$ be the upper bound in Theorem~\ref{th:bound_semi}, where $J(Q) = C (e_S(G_Q) + \frac{1}{2}d_S(G_Q)) + \frac{1}{m} \mathrm{KL}(Q \!\parallel\! P)$. Training a classifier with minimum risk means minimizing the upper bound $B(Q,C)$ w.r.t. $Q$ and $C$. Note that minimizing $B(Q,C)$ w.r.t. $Q$ equals to minimizing $J(Q)$ w.r.t. $Q$. Since unlabeled data are only available in estimating $d_S(G_Q)$, $J(Q)$ over labeled data $S_l$ of $m_l$ examples and unlabeled data $S_u$ of $m_u$ examples can be written as
\[
J(Q)=C \left[ e_{S_l}(G_Q) + \frac{1}{2}d_{S_u}(G_Q) \right] + \frac{1}{m}\mathrm{KL}(Q \!\parallel\! P)
\]
where $m=m_l+m_u$. This form enables us to derive the analytical form of posterior distributions $Q$ of the classifier and the hidden variables. Apply the above bound to the stochastic classifier defined in Eq.~(\ref{eqn:classifer}) and set $\mathbf{v}=(\mathbf{h}_+,\mathbf{h}_-)$, we have
\begin{equation}
f = f_{\mathbf{w}}, \quad \mathrm{E}_{f\sim Q} [\cdot] = \mathrm{E}_{\mathbf{w}\sim Q}[\cdot]
\end{equation}
Then learning PAC-Bayes bound classifier with the generative model embedding is to minimize the objective function $J(Q)$ w.r.t. the posterior $Q(\mathbf{w},\mathbf{h}_+,\mathbf{h}_-)$ and parameters $\theta_+,\theta_-$:
\begin{eqnarray}
\!\!\!\!J(Q) &\!\!\!\!=\!\!\!\!& C \left[ e_{S_l}(G_Q) + \frac{1}{2}d_{S_u}(G_Q) \right] + \frac{1}{m}[\mathrm{KL}(Q(\mathbf{w}) \parallel P(\mathbf{w}) ) \label{eqn:objfun} \\
&\!\!\!\!+\!\!\!\!& \mathrm{KL}(Q(\mathbf{h}_+) \parallel P(\mathbf{x},\mathbf{h}_+\,|\,\theta_+)) + \mathrm{KL}(Q(\mathbf{h}_-) \parallel P(\mathbf{x},\mathbf{h}_-\,|\,\theta_-))] \nonumber
\end{eqnarray}
where $G_Q$ is the linear stochastic classifier defined in Eq.~(\ref{eqn:classifer}); $P(\mathbf{x},\mathbf{h}_+\,|\,\theta_+)$ and $P(\mathbf{x},\mathbf{h}_-\,|\,\theta_-)$ are generative models for positive and negative classes respectively; the first row is the objective function for regular Gibbs classifier and the second row is the objective function for two generative models.

To compute the objective function $J(Q)$ in Eq.~(\ref{eqn:objfun}), we will need to have approximations or expressions for $e_{S_{l}}$, $d_{S_{u}}$ and $\mathrm{KL}(Q||P)$ that are computationally tractable. To derive these expressions, as were done in~\cite{langford2006tutorial}, we {\it assume} that the prior of the weight is Gaussian $P(\mathbf{w})= N(\mathbf{u}_0,\mathrm{I})$ and its posterior is also Gaussian except with a different mean, i.e., $Q(\mathbf{w})= N(\mathbf{u},\mathrm{I})$. Based on this assumption, we have:
\begin{equation} \label{eqn:specif1}
\mathrm{KL}(Q(\mathbf{w})\!\parallel\! P(\mathbf{w}))=\frac{1}{2} \parallel\! \mathbf{u}-\mathbf{u}_0 \!\parallel^2
\end{equation}
Using the assumption and Gaussian integrals~\cite{langford2006tutorial}, we have,
\begin{equation} \label{eqn:specif2}
\mathop{\mathrm{E}}\limits_{\mathbf{w}\sim Q}\mathbf{I}(f_{\mathbf{w}}(\mathbf{x},\mathbf{h}_+,\mathbf{h}_-)\neq y)=\Phi\left( {y \mathbf{u}\cdot \bar{\phi}}(\mathbf{x},\mathbf{h}_+,\mathbf{h}_-)  \right)
\end{equation}
where $\Phi(a)\!=\!\int_a^{\infty}\!\! \frac{1}{\sqrt{2\pi}}\exp{(-\frac{x^2}{2})}dx \!=\!\frac{1}{2}\mathrm{erfc}(\frac{a}{\sqrt{2}})$ and $\bar{\phi} \!=\! \frac{\phi(\mathbf{x},\mathbf{h}_+,\mathbf{h}_- )}{\!\parallel\! \phi(\mathbf{x},\mathbf{h}_+,\mathbf{h}_-) \!\parallel}$. Further, considering Eq.~(\ref{eqn:specif2}), we have the integration:
\begin{eqnarray}
\mathop{\mathrm{E}}\limits_{\mathbf{w}_1,\mathbf{w}_2 \sim Q} \!\!\! \mathrm{I}(f_{\mathbf{w}_1} \neq f_{\mathbf{w}_2}) &\!\!\!=\!\!\!& \mathop{\mathrm{E}}\limits_{\mathbf{w}_1,\mathbf{w}_2\sim Q} \!\!\! 2 \mathrm{I}(f_{\mathbf{w}_1} \neq 1)\mathrm{I}(f_{\mathbf{w}_2} \neq -1)  \label{eqn:specif3} \\
&\!\!\!=\!\!\!& 2 \Phi \!\left( {\mathbf{u}\cdot \bar{\phi}(\mathbf{x},\mathbf{h}_+,\mathbf{h}_-)}  \right) \Phi \!\left(\! {-\mathbf{u}\cdot \bar{\phi}(\mathbf{x},\mathbf{h}_+,\mathbf{h}_-)}  \right) \nonumber
\end{eqnarray}
With these formulas, we proceed to obtain an expression for $J(Q)$, and find $Q(\mathbf{h}_+,\mathbf{h}_-)$, $Q(\mathbf{w})$, $\theta_+$ and $\theta_-$ by minimizing $J(Q)$ with an iterative optimization procedure in the next section.

\section{Inference and parameter estimation}

In this section, we derive the learning procedure (inference and parameter estimation) of the proposed approach. Consider Eq.~(\ref{eqn:objfun}) and $e_S(G_Q)\!=\!\frac{1}{m} \sum\nolimits_i \mathrm{E}_{\mathbf{w}_1} \mathrm{I}(f_{\mathbf{w}_1}(\mathbf{x}_i) \!\neq\! y_i) \, \mathrm{E}_{\mathbf{w}_2} \mathrm{I}(f_{\mathbf{w}_2}(\mathbf{x}_i) \!\neq\! y_i)$ and $d_S(G_Q)\!=\!\frac{1}{m}\sum\nolimits_i \mathrm{E}_{\mathbf{w}_1,\mathbf{w}_2}\mathrm{I}(f_{\mathbf{w}_1}(\mathbf{x}_i) \neq f_{\mathbf{w}_2}(\mathbf{x}_i))$ (Th.\ref{th:bound_semi}), $J(Q)$ (average cost) over the training set $S\!=\!S_l \cup S_u$ is:
\begin{eqnarray*}
J(Q) \!=\! C \Bigg[ \frac{1}{m_l} \!\sum_{i=1}^{m_l}\! \mathrm{E}_Q \mathrm{I}(f_{\mathbf{w}}(\mathbf{x}_i)\neq y_i)+\frac{1}{m_u} \!\sum_{i=1}^{m_u}\! \mathrm{E}_Q \mathrm{I}(f_{\mathbf{w}_1}(\mathbf{x}_i)\neq f_{\mathbf{w}_2}(\mathbf{x}_i)) \Bigg]   \\
+ \frac{1}{m^2} \sum_{i=1}^m  \mathrm{KL}(Q_i(\mathbf{h}_+,\mathbf{h}_-) \!\parallel\! P(\mathbf{x}_i,\mathbf{h}_+,\mathbf{h}_-) )  + \frac{1}{m}\mathrm{KL}(Q(\mathbf{w}) \!\parallel\! P(\mathbf{w}))
\end{eqnarray*}
where the terms $\mathrm{E}_Q \mathrm{I}(f_{\mathbf{w}}(\mathbf{x}_i)\neq y_i)$, $\mathrm{E}_Q \mathrm{I}(f_{\mathbf{w}_1}(\mathbf{x}_i)\neq f_{\mathbf{w}_2}(\mathbf{x}_i))$ and $\mathrm{KL}(Q(\mathbf{w}) \!\parallel\! P(\mathbf{w}))$ have been respectively given by Eq.~(\ref{eqn:specif2}), Eq.~(\ref{eqn:specif3}) and Eq.~(\ref{eqn:specif1}); $P(\mathbf{x}_i,\mathbf{h}_+,\mathbf{h}_-)$ is the abbreviation of $P(\mathbf{x}_i,\mathbf{h}_+)P(\mathbf{x}_i,\mathbf{h}_-)$. We now show how an EM-like iterative procedure~\cite{jordan1999variational} can be used to learn the stochastic feature space and the Gibbs classifier simultaneously.

\subsection{Inference: minimize $J(Q)$ w.r.t. $Q_i(\mathbf{h}_+,\mathbf{h}_-)$}

In the {\it first step}, we fix $Q(\mathbf{w})$, $\theta_+$, $\theta_-$, and minimize $J(Q)$ w.r.t. $Q_i(\mathbf{h}_+,\mathbf{h}_-)$, subject to $\int\! Q_i(\mathbf{h}_+,\mathbf{h}_-) {d}\mathbf{h}_+ {d}\mathbf{h}_-\!=\!1$. Benefiting from the explicit bound (Th.~2), we has the following solution:
\begin{equation} \label{eqn:post_fact}
Q_i(\mathbf{h}_+,\mathbf{h}_-) = \frac{1}{Z_i} P(\mathbf{h}_+,\mathbf{h}_-,\mathbf{x}_i) \exp \left\{-C \mathrm{E}_{Q(\mathbf{w})} [\varphi_i] \right\} 
\end{equation}
where $\varphi_i = \frac{m^2}{m_l}\mathrm{I}(f_{\mathbf{w}} \neq y_i)$ if $\mathbf{x}_i\in S_l$ and $\varphi_i = \frac{m^2}{2m_u}\mathrm{I}(f_{\mathbf{w}_1}\neq f_{\mathbf{w}_2})$ if $\mathbf{x}_i\in S_u$. Note that $\mathrm{E}_{Q(\mathbf{w})} [\varphi_i]$ is given by Eq.~(\ref{eqn:specif2}) and Eq.~(\ref{eqn:specif3}).
The fact that the output of classifier is inside the expression for posteriors means that the generative models are being tuned as well when the classifier is being optimized during the minimization of PAC-Bayes bound. This tuning inhibits those examples of $\mathbf{h}_+$, $\mathbf{h}_-$ that lead to misclassification and encourages those with less misclassification. Sampling from this posterior is simple using {\it Gibbs-rejection sampling}, because $P(\mathbf{h},\mathbf{x}_i)$ can be directly used as the comparison function since $P(\mathbf{h},\mathbf{x}_i)\exp(\cdot) \leq P(\mathbf{h},\mathbf{x}_i)$ ($\mathrm{E}_{Q(\mathbf{w})} [\varphi_i]\geq 0$ as $\mathrm{I}(\cdot)$ is a zero-one output function). Considering the $j$-th example $\mathbf{h}_{ij}$ drawn from $P(\mathbf{h},\mathbf{x}_i)$, we reject it if $Q_i(\mathbf{h}_{ij}) \!<\! r_j$ where $r_j$ is drawn from the uniform distribution over $[0,P(\mathbf{h}_{ij},\mathbf{x}_i)]$.

\subsection{Parameter estimation: minimize $J(Q)$ w.r.t parameters}

In the {\it second step}, we fix the posteriors $Q_i(\mathbf{h}_+,\mathbf{h}_-)$ and parameters $\theta_+,\theta_-$, and determine the posterior distribution $Q(\mathbf{w})$. Instead of sampling from $Q(\mathbf{w})$, we directly determine its parameter $\mathbf{u}$ by minimizing $J(Q)$ w.r.t. $\mathbf{u}$. Since $J(Q)$ w.r.t. $\mathbf{u}$ is intractable, we resort to minimizing its upper bound $J(\mathbf{u})$ (see the Appendix) w.r.t. $\mathbf{u}$. The gradient of $J(\mathbf{u})$ w.r.t. $\mathbf{u}$ is:
\begin{eqnarray}
\frac{\partial J(\mathbf{u})}{\partial \mathbf{u}}  &\!\!\!=\!\!\!& \frac{1}{m}(\mathbf{u} \!-\! \mathbf{u}_0) - \frac{C}{m_l n} \!\sum\nolimits_{i,j=1}^{m_l,n}\!  g (y_i\mathbf{u}\cdot \bar{\phi}_{ij}) \,y_i\, \bar{\phi}_{ij}  \\
&\!\!\!+\!\!\!& \frac{C}{m_u n} \!\sum\nolimits_{i,j=1}^{m_u,n}\! g(\mathbf{u}\cdot \bar{\phi}_{ij}) \left[ \Phi(\mathbf{u} \cdot \bar{\phi}_{ij}) - \Phi(-\mathbf{u}  \cdot \bar{\phi}_{ij}) \right] \bar{\phi}_{ij} \nonumber
\end{eqnarray}
where $g(\cdot)$ is the gaussian function with mean $\mu=0$ and std $\delta=1$. The gradient of $B(Q,C)$ with respect to $C$ is:
\begin{eqnarray}
\frac{\partial B(Q,C)}{\partial C} &\!\!\!\!\!=\!\!\!\!\!& \frac{-e^{-C}}{(1-e^{-C})^2} \left[1-\exp{\Big(-J(Q) \!-\! \frac{1}{m}\log \delta \Big)}\right] \\
&\!\!\!\!\!+\!\!\!\!\!& \frac{1}{1-e^{-C}}[R_{S_l}(G_Q)+d_{S_u}(G_Q)] \exp{ \Big(\!-\!J(Q) \!-\! \frac{1}{m}\log \delta \Big)} \nonumber
\end{eqnarray}

In the {\em third step}, we fix $Q_i(\mathbf{h}_+,\mathbf{h}_-)$, $\mathbf{u}$ and update parameters $\theta_+,\theta_-$. Note only the third term of Eq.~(\ref{eqn:objfun}), i.e., the objection function of the positive model, involves $\theta_+$. {\em So the update rules of $\theta_+$, derived by minimizing Eq.~(\ref{eqn:objfun}) w.r.t. $\theta_+$, are same as those of the original generative model}. Similarly for $\theta_-$.

The {\it learning} procedure is summarized in Algorithm~\ref{alg:mixture}. In {\em classification}, similar with~\cite{germain2009pac}, we use the decision rule of majority vote $\hat{y}=\mathrm{sign}[\frac{1}{n}\sum_{j=1}^n \mathrm{E}_{P(\mathbf{h}_+,\mathbf{h}_-|\mathbf{x}_i)Q(\mathbf{w})}\mathbf{w}\cdot \phi(\mathbf{x}_i,\mathbf{h}_+,\mathbf{h}_-)]\simeq \mathrm{sign}[\frac{1}{n}\sum_{j=1}^n \mathbf{u}\cdot \phi(\mathbf{x}_i,\mathbf{h}_{+ij},\mathbf{h}_{-ij})]$ with $n=5$ and $(\mathbf{h}_{+ij},\mathbf{h}_{-ij})$ being the $j$-th example drawn from $P(\mathbf{h}_+,\mathbf{h}_-|\mathbf{x}_i)$.
\begin{algorithm}
  \caption{Inference and learning}
  \label{alg:mixture}
  \begin{algorithmic}[1]
    \STATE \textbf{input:} data set $S_l,S_u$, and $S'_l,S'_u$ are fractions of $S_l,S_u$
    \STATE initialize $\hat{\mathbf{u}},\hat{\theta}_{+},\hat{\theta}_{-}$, learning rates $\gamma_u,\gamma_c$, and $\delta=0.05$
    \STATE $\hat{\mathbf{u}}_0 \leftarrow \min_{\mathbf{u}}R_{S'_l}(G_Q)+\frac{1}{2}d_{S'_u}(G_Q)$
    \REPEAT
      \FOR{$i=1$ to $m$}
        \STATE sample $Q_i(\mathbf{h}_+,\mathbf{h}_-)$ using Gibbs-rejection sampling
      \ENDFOR
      \STATE update $\hat{\theta}_{+}$ with $\{\mathbf{h}_{+ij}\}_{ij}$ ($\mathbf{x}_i\in S_l^+$) using the rules of the original generative model. Similar for $\hat{\theta}_{-}$.  
      \STATE $\hat{\mathbf{u}} \leftarrow \hat{\mathbf{u}} - \gamma_u \frac{\partial J(\mathbf{u})}{\partial \mathbf{u}}$, $\;$ $C \leftarrow C - \gamma_c \frac{\partial B}{\partial C}$
    \UNTIL{convergence}
    \STATE \textbf{output:} $\hat{\mathbf{u}}_0$, $\hat{\mathbf{u}}$, $\hat{\theta}_{+}$, $\hat{\theta}_{-}$
  \end{algorithmic}
\end{algorithm}

\section{Experiments}
\label{sec:experiment}

This section empirically evaluates the proposed method stochastic feature mapping (SFM) and related methods on general classification tasks, scene recognition and protein sequence classification respectively. For multiple-class classification problem, we divide it into binary classification problems, each of which is an {\it one-versus-rest problem} that distinguishes one class from others. For each binary problem, we randomly partition the positive examples into $50\%$ training and $50\%$ test sets, and similarly for negative examples. We test each binary classification problem on $20$ random partitions, and report the average results. For the semi-supervised version, we use $25\%$ of test examples as unlabeled data. Two {\em related and general methods}, Fisher score (FS)~\cite{jaakkola1999exploiting} and free energy score space (FESS)~\cite{perina2009fess}, and some other state-of-the-art methods are also tested for comparison.

There are two points in implementation. First, the optimization procedures of $\mathbf{u}_0$ and $\mathbf{u}$ may suffer from the local minima problem, resulting in poor solution. The strategy adopted by~\cite{germain2009pac} is to perform the optimization for $10 \!\sim\! 100$ trials where a new random initial point within the range $[-20,20]^d$ is used in each trial. Second, the value of parameter $C$ has been shown to be important. Another effective strategy experimented is to assess the performance using 10-fold cross-validation.

\subsection{Deriving a general classification tool}
\label{ssec:uci}

In the first experiment, we derive a general classification method by applying the proposed framework to a simple yet general generative model, Gaussian mixture model. Let $\mathbf{x}\in \mathbb{R}^d$ be the observed variable; $\mathbf{z}=\{z_1,\cdots,z_k\}$ be the hidden binary indicate vector for $K$ mixture components, and assume the covariance matrix be diagonal; $\mathbf{a}=\{a_1,\cdots,a_k\}$ be the parameters of the approximate posterior of $\mathbf{z}$. The elements of the feature mapping $\phi$ of this model are $\{\mathbf{z}_i (\mathbf{x}^T,\mathrm{diag}(\mathbf{x}\mathbf{x}^T),1),z_i\log a_i\}_{i=1}^K$. The posterior of $\mathbf{z}$ can be easily derived from Eq.~(\ref{eqn:post_fact}). The number of mixture components is configured to $K=4$ throughout the experiment.

We select $8$ data sets from UCI database for evaluation, preferring those with no missing entities. The number of classes of each data set is between $2$ and $15$. The number of examples of each class varies from $14$ and $673$. The dimensionality is between $9$ and $90$. We compare our method SFM with Adaboost~\cite{freund1995boost}, SVM~\cite{vapnik2000nature}, localized multiple kernel learning (LMKL)~\cite{gonen2008lmkl} and PAC-Bayes gradient descent PBGD3~\cite{germain2009pac}. The average results are reported in Table~\ref{tab:uci}. It shows that SFM is adaptive to different data sets and outperforms other methods in half of the data sets. It is also worth noting that the linear version of PBGD3 does work well in these evaluation. The results of semi-supervised version is presented in Fig.~\ref{fig:gk_perf_centnum}.

\begin{table*}[t]
\caption{Classification accuracy ($\%\pm$std) on UCI database ({\it one-versus-rest} on each dataset).}
\label{tab:uci}
\begin{center}
\begin{tabular}{|c|c|c|c|c|c|c|}
\hline
DATA     & Adaboost  & SVM   & LMKL   & PBGD3       & SFM-GMM \\
\hline\hline
Cancer   & $93.24\pm1.26$           & $\mathrm{\textbf{96.80}}\pm\mathrm{\textbf{1.79}}$      & $96.41\pm0.97$   & $93.98\pm1.52$    & $95.02\pm0.95$  \\
\hline
Tissue   & $88.55\pm5.91$           & $78.17\pm12.27$     & $87.69\pm5.24$   & $88.14\pm0.50$    & $\mathrm{\textbf{89.04}}\pm\mathrm{\textbf{3.12}}$   \\
\hline
Wine     & $92.98\pm3.42$           & $\mathrm{\textbf{97.73}}\pm\mathrm{\textbf{1.86}}$     & $95.48\pm4.10$   & $92.22\pm12.63$    & $95.79\pm1.33$  \\
\hline
Sonar    & $70.87\pm4.76$           & $73.11\pm3.25$     & $80.21\pm1.52$   & $75.52\pm5.70$    & $\mathrm{\textbf{80.60}}\pm\mathrm{\textbf{4.24}}$  \\
\hline
Credit   & $84.74\pm1.30$           & $84.74\pm1.79$     & $81.92\pm1.41$   & $83.53\pm1.82$    & $\mathrm{\textbf{84.89}}\pm\mathrm{\textbf{1.30}}$  \\
\hline
SpHeart  & $78.65\pm2.08$           & $74.66\pm3.56$     & $80.38\pm3.40$   & $79.70\pm0.65$    & $\mathrm{\textbf{80.84}}\pm\mathrm{\textbf{0.49}}$  \\
\hline
Libras   & $92.97\pm1.76$           & $87.54\pm7.01$     & $\mathrm{\textbf{96.58}}\pm\mathrm{\textbf{1.78}}$   & $94.52\pm2.80$    & $95.61\pm3.34$  \\
\hline
Steel    & $89.47\pm9.08$           & $86.43\pm9.16$     & $\mathrm{\textbf{92.63}}\pm\mathrm{\textbf{8.14}}$   & $87.30\pm8.26$    & $89.10\pm8.49$  \\
\hline
\end{tabular}
\end{center}
\end{table*}

\begin{figure}[t]
\centering
\includegraphics[width=1.12in]{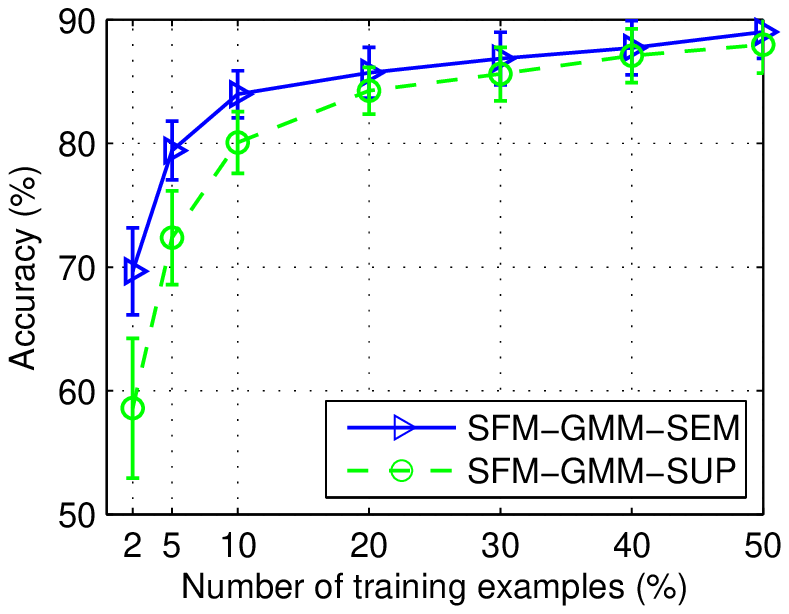}
\includegraphics[width=1.12in]{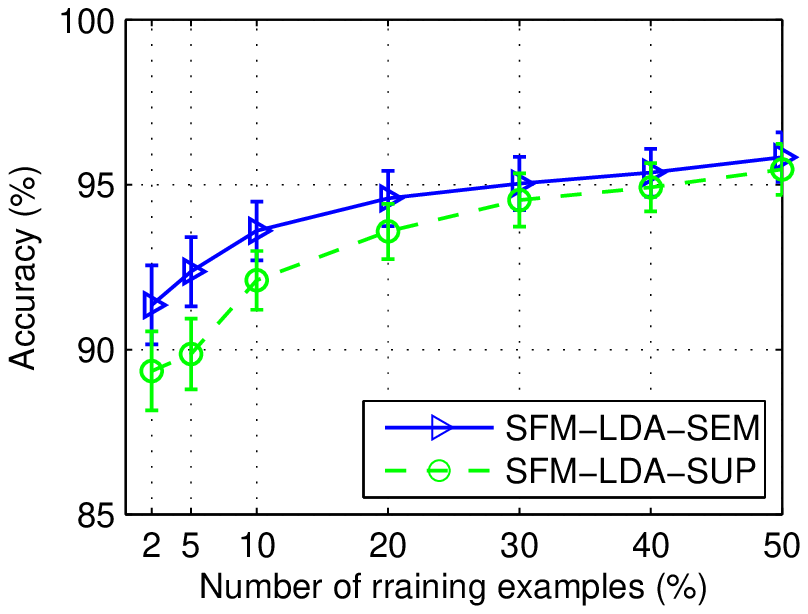}
\includegraphics[width=1.12in]{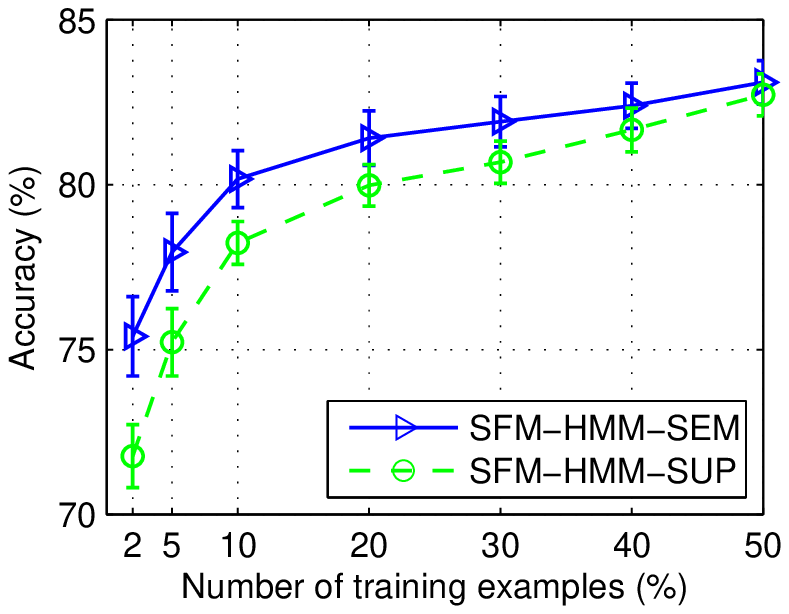}
\caption{Classification accuracy ($\%$) for varying number of training examples. {\it left}: Tissue (UCI); {\it middle}: Highway (scene); {\it right}: Superfamily \#2 (protein).}
\label{fig:gk_perf_centnum}
\end{figure}

\subsection{Scene recognition}

\begin{table*}[t]
\caption{Accuracy ($\%\pm$std.) of {\it one-versus-rest} scene recognition.}
\label{tab:scene}
\begin{center}
\begin{tabular}{|c|c|c|c|c|c|}
\hline
SCENE  & PHOW-SVM   & LDA-MAP   & FS-LDA   & FESS-LDA    & SFM-LDA \\
\hline\hline
Coast      & $90.66\pm0.65$    & $83.85\pm0.92$         & $90.42\pm0.34$          & $93.89\pm0.46$   & $\mathrm{\textbf{94.06}}\pm\mathrm{\textbf{0.64}}$  \\
\hline
Forest     & $96.49\pm0.39$    & $94.94\pm0.46$         & $94.45\pm0.46$          & $97.92\pm0.26$   & $\mathrm{\textbf{98.10}}\pm\mathrm{\textbf{0.32}}$  \\
\hline
Mountain   & $92.58\pm0.64$    & $84.99\pm1.78$         & $88.62\pm0.50$          & $93.29\pm0.47$   & $\mathrm{\textbf{93.80}}\pm\mathrm{\textbf{0.44}}$  \\
\hline
Country    & $\mathrm{\textbf{91.38}}\pm\mathrm{\textbf{0.71}}$      & $72.30\pm1.74$         & $87.40\pm0.46$          & $90.62\pm0.33$   & $90.82\pm0.62$   \\
\hline
Highway    & $95.27\pm0.49$    & $81.50\pm1.28$         & $92.48\pm0.22$          & $94.67\pm0.34$   & $\mathrm{\textbf{95.46}}\pm\mathrm{\textbf{0.30}}$  \\
\hline
InsideCity & $93.96\pm0.62$    & $85.14\pm1.74$         & $90.79\pm0.14$          & $94.26\pm0.65$   & $\mathrm{\textbf{95.27}}\pm\mathrm{\textbf{0.35}}$  \\
\hline
Street     & $93.89\pm0.64$    & $76.46\pm1.23$         & $93.76\pm0.24$          & $94.21\pm0.42$   & $\mathrm{\textbf{95.03}}\pm\mathrm{\textbf{0.47}}$  \\
\hline
Building   & $94.40\pm0.49$    & $87.85\pm0.55$         & $92.83\pm0.57$          & $\mathrm{\textbf{96.06}}\pm\mathrm{\textbf{0.51}}$  &  $95.81\pm0.38$  \\
\hline
\end{tabular}
\end{center}
\end{table*}

We evaluate our SFM method and compare its performance against comparable methods on a typical vision task, scene recognition. In this task, visual words are used for image representation for its robustness to topic and spatial variance. We use latent Dirichlet allocation (LDA)~\cite{blei2003latent} to model the distributions of visual words, and derive a recognition tool under the proposed framework. Like~\cite{griffiths2004gibbslda}, we sample the topic variable using collapsed Gibbs sampling and reject examples according to the rule for Eq.~(\ref{eqn:post_fact}). We fix the parameter $\alpha$ and allow $\beta$~\cite{griffiths2004gibbslda} to be updated. Let $w,z$ respectively indicate word and topic, and $\gamma$ be the parameter of the approximate posterior of $z$. The elements of the feature mapping $\phi$ of a model are $\{z_{nk},w_n z_{nk},z_{nk}\log \gamma_{nk} \}_{n,i,k}$ where  $n,i,k$ index word, term and topic respectively. For FS~\cite{jaakkola1999exploiting} and FESS~\cite{perina2009fess}, we extract features from the trained LDA model and deliver to SVM. The number of topics of LDA is set to $50$.

The CVCL scene dataset is chosen for evaluation. It contains $4$ artificial scenes and $4$ natural scenes. For each image, dense SIFT descriptors~\cite{lowe2004distinctive} are extracted from $20\!\times\!20$ grid patches over $4$ scales. These descriptors are quantized to visual words using a code book (50 centers) clustered from some random selected descriptors. The resulting visual words of an image are in the form of histogram where each bin corresponds to a code center of the code book. The evaluation results are summarized in Table~\ref{tab:scene}. Our results compare well with PHOW~\cite{vedaldi2009multiple} which is a state-of-the-art feature for scene recognition. The results of semi-supervised learning are shown in Fig.~\ref{fig:gk_perf_centnum}, demonstrating unlabeled examples can help classification particularly when there are few labeled examples.

\subsection{Protein classification}
\label{ssec:protein}

\begin{table*}[t]
\caption{Accuracy ($\%\pm$std.) of {\it one-versus-rest} protein recognition.}
\label{tab:protein}
\begin{center}
\begin{tabular}{|c|c|c|c|c|c|}
\hline
 SUP.FAM.   & 2GRAM-SVM  & HMM-MAP  & FS-HMM   & FESS-HMM   & SFM-HMM \\
\hline\hline
\# 1    & $78.79\pm1.13$   & $80.91\pm1.53$    & $80.03\pm0.78$            & $80.12\pm0.84$   & $\mathrm{\textbf{82.75}}\pm\mathrm{\textbf{0.96}}$  \\
\hline
\# 2    & $79.01\pm0.97$   & $80.10\pm0.51$    & $77.56\pm0.64$            & $78.96\pm0.59$   & $\mathrm{\textbf{83.08}}\pm\mathrm{\textbf{0.55}}$  \\
\hline
\# 3    & $75.19\pm0.86$   & $77.92\pm0.79$    & $73.31\pm0.21$            & $73.35\pm0.41$   & $\mathrm{\textbf{79.28}}\pm\mathrm{\textbf{0.64}}$  \\
\hline
\# 4    & $96.01\pm0.33$   & $95.10\pm0.39$    & $94.27\pm0.37$            & $\mathrm{\textbf{97.58}}\pm\mathrm{\textbf{0.13}}$  &  $96.72\pm0.63$   \\
\hline
\end{tabular}
\end{center}
\end{table*}

To evaluate the capability of the proposed approach in dealing with variable length sequences, we apply the proposed framework to remote homology recognition. The problem here assigns test protein sequences to the domain superfamilies defined in the SCOP (1.53) taxonomy tree according to functions of proteins. The protein sequence data is obtained from ASTRAL database with E-value threshold of $10^{-25}$ to reduce similar sequences. We uses four labeled domain superfamilies, metabolism, information, intra-cellular processes and extra-cellular processes for evaluation. The numbers of sequences are 804, 950, 695 and 992 respectively. Each protein sequence is a string composed of $22$ distinct letters, and the string length varies from $20$ to $994$.

Hidden Markov model (HMM)~\cite{rabiner1989tutorial} is used to model the distribution over protein sequences for its ability in handling sequences with variable length. The number of output states is $22$, and the number of hidden states is set to $10$. Let $\mathbf{x}$ be the sequence with length $T_{\mathbf{x}}$, where $\mathbf{x}^t$ be the binary indicator where $x_k^t=1$ if the $k$-th state of $K$ possible ones is selected at time $t$. Let $\mathbf{q}^t$ be the binary state indicator where $q_i^t=1$ if the $i$-th state of $M$ possible ones is selected at time $t$; $A_{M\times M}$ be the transition probabilities of the approximate posteriors. The elements of the feature mapping $\phi$ can be written as $\{q_i^0, \sum_{t=0}^{T_{\mathbf{x}}-1}\!\! q_i^t q_j^{t+1}, \sum_{t=0}^{T_{\mathbf{x}}-1}\!\! q_i^t q_j^{t+1} \!\log A_{ij}, \sum_{t=0}^{T_{\mathbf{x}}}\! q_i^t x_k^t \}_{i,j,k}$. With the hidden states of the input sequence inferred by Baum-Welch algorithm~\cite{baum1970maximization}, it is easy to estimate the posterior transition probabilities conditioned on $\mathbf{x}$. Using the sampling distribution derived in Eq.~(\ref{eqn:post_fact}), we are able to draw the examples of hidden states and re-estimate their posterior. The results are reported in Table~\ref{tab:protein}. The 2-gram feature is actually the transition probability of observed states of a sequence, i.e. $\{ \frac{1}{T_c}\sum_{t=0}^{T_c-1} x_i^t x_k^{t+1}\}_{i,k}$. The difference of the performance of the first four methods are not significant except on family \#3. The results of semi-supervised learning are reported in Fig.~\ref{fig:gk_perf_centnum}, which shows improvement on few training samples.

\section{Conclusions}
\label{sec:conclusion}

This paper presents a framework to incorporate the abilities of generative model and discriminative model for classification under the PAC-Bayes theory. The bridge of this incorporation is a stochastic feature mapping which is derived from the linear form of the practical MAP classifier and is independent with the parameters of the adopted generative models. Under this framework, the derived stochastic feature mapping and generative models can be tuned during the training of the classifier. A major difficulty is the non-convexity of the objective function, where local minima can hamper the solution. Our approach can benefit from the development or exploitation of more robust and efficient optimization methods.


\section*{Appendix}

Since $J(Q)$ is intractable, we derive its upper bound by fixing $\theta_+,\theta_-$, $Q_i(\mathbf{h}_+,\mathbf{h}_-)$. Using Eq.~(\ref{eqn:specif1}) and Eq.~(\ref{eqn:post_fact}), we have
\begin{eqnarray*}
\!\mathrm{KL}(Q \!\parallel\! P ) &\!\!\!\!\!=\!\!\!\!\!& \mathrm{KL}(Q(\mathbf{w},\mathbf{h}_+,\mathbf{h}_-) \!\parallel\! P(\mathbf{w})P(\mathbf{x},\mathbf{h}_+\,|\,\theta_+)P(\mathbf{x},\mathbf{h}_-\,|\,\theta_-)) \label{eqn:objfun2}  \\
&\!\!\!\!\!=\!\!\!\!\!& \mathop{\mathrm{E}}\limits_{Q_{\mathbf{w}}} \left[\log \frac{Q(\mathbf{w})}{P(\mathbf{w})}  \right] + \frac{1}{m} \sum_{i=1}^m \mathop{\mathrm{E}}\limits_{Q_i} \left[\log \frac{Q_i(\mathbf{h}_+)Q_i(\mathbf{h}_-)}{P(\mathbf{x}_i,\mathbf{h}_+)P(\mathbf{x}_i,\mathbf{h}_-)} \right]  \\
&\!\!\!\!\!=\!\!\!\!\!& \frac{1}{2} \!\!\parallel\! \mathbf{u} \!-\! \mathbf{u}_0 \!\parallel^2\!\! - Cm \!\left[ e_{S_l}(G_Q) \!+\! \frac{1}{2}d_{S_u}\!(G_Q) \right] \!-\! \frac{1}{m} \!\!\sum_{i=1}^m \! \log Z_i
\end{eqnarray*}
where $Q_{\mathbf{w}}=Q(\mathbf{w})$, $Q_i=Q_i(\mathbf{h}_+,\mathbf{h}_-)$ and
\begin{eqnarray*}
\!\sum_{i=1}^m \!\log Z_i &\!\!\!\!\!=\!\!\!\!\!& \sum_{i=1}^m \log \mathop{\mathrm{E}}_{Q_i}[\exp \left\{-C \mathop{\mathrm{E}}\nolimits_{Q(\mathbf{w})} [\varphi_i] \right\}] \geq \!\sum_{i=1}^m \mathop{\mathrm{E}}_{Q_i} [-C \mathop{\mathrm{E}}\nolimits_{Q_{\mathbf{w}}} [\varphi_i]] \\
&\!\!\!\!\!=\!\!\!\!\!& \sum_{i=1}^{m_l} \!\mathop{\mathrm{E}}\limits_{Q_{\mathbf{w}}Q_i}\! \left[\frac{C m^2}{m_l} \mathrm{I}(f_{\mathbf{w}}^i \!\neq\! y_i)\right] + \sum_{i=1}^{m_u} \!\mathop{\mathrm{E}}\limits_{Q_{\mathbf{w}}Q_i}\! \left[\! \frac{Cm^2}{2m_u} \mathrm{I}(f_{\mathbf{w}_1}^i \!\neq\! f_{\mathbf{w}_2}^i) \!\right]  \\
&\!\!\!\!\!\simeq\!\!\!\!\!&  \frac{Cm^2}{m_l n} \!\sum_{i,j=1}^{m_l,n} \!\Phi\! \left( y_i \mathbf{u} \!\cdot\! \bar{\phi}_{ij} \right)
+ \!\frac{Cm^2}{m_u n} \!\sum_{i,j=1}^{m_u,n} \!\Phi\! \left(\mathbf{u} \!\cdot\! \bar{\phi}_{ij} \right) \!\Phi\! \left(\!-\mathbf{u} \!\cdot\! \bar{\phi}_{ij} \right)
\end{eqnarray*}
where the inequality is derived by applying Jensen's inequality; $f_{\mathbf{w}}^i\!=\!f_{\mathbf{w}}(\mathbf{x}_i)$; $ \bar{\phi}_{ij} \!=\! \frac{\phi(\mathbf{x}_i,\mathbf{h}_{+ij},\mathbf{h}_{-ij})}{\parallel\! \phi(\mathbf{x}_i,\mathbf{h}_{+ij},\mathbf{h}_{-ij}) \!\parallel}$ where $(\mathbf{h}_{+ij},\mathbf{h}_{-ij})$ represents the $j$-th example drawn from $Q_i(\mathbf{h}_+,\mathbf{h}_-)$. Now we have all the pieces for $e_{S_{l}}$, $d_{S_{u}}$ and $\mathrm{KL}(Q\!\parallel\!P)$, and can obtain,
\begin{eqnarray*}
J(Q) &\!\!\!\!=\!\!\!\!& C \left[ e_{S_l}(G_Q) + \frac{1}{2}d_{S_u}(G_Q) \right] + \frac{1}{m}\mathrm{KL}(Q \parallel P )  \\
&\!\!\!\!=\!\!\!\!& \frac{1}{2m} \!\parallel\! \mathbf{u} - \mathbf{u}_0 \!\parallel^2  - \frac{1}{m^2} \sum_i \log Z_i \\
&\!\!\!\!\leq\!\!\!\!&  \frac{1}{2m} \!\parallel\! \mathbf{u}-\mathbf{u}_0 \!\parallel^2 + \frac{C}{m_l n} \sum_{i=1}^{m_l}\sum_{j=1}^n \Phi \left( y_i \mathbf{u}\cdot \bar{\phi}_{ij} \right) \\
&\!\!\!\!+\!\!\!\!& \frac{C}{m_u n}\sum_{i=1}^{m_u}\sum_{j=1}^n  \Phi \left(\mathbf{u}\cdot \bar{\phi}_{ij} \right)\Phi \left(-\mathbf{u}\cdot \bar{\phi}_{ij} \right) \triangleq J(\mathbf{u})
\end{eqnarray*}

\ifCLASSOPTIONcompsoc
  \section*{Acknowledgments}
\else
  \section*{Acknowledgment}
\fi

This work is supported by National Basic Research Program of China (Grant No. 2011CB302203), NSFC (Grant No. 60833009 and 60975012) and Microsoft Research Fellowship.

\ifCLASSOPTIONcaptionsoff
  \newpage
\fi



\bibliographystyle{IEEEtran}
\end{document}